\newtheorem{theorem}{Theorem}[section]
\newtheorem{proposition}[theorem]{Proposition}
\newcommand{\obs}{{\boldsymbol{\mathrm{x}}_i^{(t)}}}
\newcommand*{\VEC}[1]  {\ensuremath{\boldsymbol{#1}}}
\newcommand*{\MAT}[1]  {\ensuremath{\boldsymbol{#1}}}
\def\Hzero{\mathrm{H}_0} 
\def\Hone{\mathrm{H}_1} 
\newcommand\supinf{\underset{\Hzero}{\overset{\Hone}{\gtrless}}} 
\def\Tr{\,\text{tr}}
\DeclareMathOperator{\D}{D}
\DeclareMathOperator{\grad}{grad}
\journal{Signal Processing}
\begin{document}

\begin{frontmatter}

\title{Riemannian geometry for Compound Gaussian distributions: application to recursive change detection}

\author[label1]{Florent Bouchard}
\cortext[mycorrespondingauthor]{Corresponding author}
\ead{florent.bouchard@univ-smb.fr}
\author[label2]{Ammar Mian}
\author[label3]{Jialun Zhou}
\author[label3]{Salem Said}
\author[label1]{Guillaume Ginolhac}
\author[label3]{Yannick Berthoumieu}
\address[label1]{LISTIC, University Savoie Mont-Blanc, France}
\address[label2]{SONDRA, CentraleSupelec, France}
\address[label3]{IMS, University of Bordeaux, CNRS, France}

\markboth{Submission to Signal Process.,~2020}{Bouchard \MakeLowercase{\textit{et al.}}: New Geometric Distance for Compound Gaussian distributions. Application to Recursive Change Detection}

\begin{abstract}
A new Riemannian geometry for the Compound Gaussian distribution is proposed.
In particular, the Fisher information metric is obtained, along with corresponding geodesics and distance function.
This new geometry is applied on a change detection problem on Multivariate Image Times Series: a recursive approach based on Riemannian optimization is developed.
As shown on simulated data, it allows to reach optimal performance while being computationally more efficient.
\end{abstract}
\begin{keyword}
Riemaniann geometry and optimization, covariance matrix estimation, compound Gaussian distribution, change detection.
\end{keyword}

\end{frontmatter}

\section{Introduction}
\label{sec:intro}

Covariance matrix is an important topic in signal and image processing.
When data are Gaussian distributed, the Maximum Likelihood Estimator (MLE) is the well known Sample Covariance Matrix (SCM).
However, this estimator features poor performance when data follow a more heavy-tailed distribution.
In such a case, it is interesting to model the data with a Complex Elliptically Symmetric (CES) distribution~\cite{Ollila2012} and to employ M-estimators~\cite{Pascal08} for covariance estimation.
In this paper, we limit ourselves to the Compound Gaussian (CG) distribution~\cite{Yao73,Gini2002}, which is a CES sub-family.
Its stochastic representation consists in a Gaussian vector multiplied by a positive scalar, called texture.
For instance, this family fits well RADAR empirical data~\cite{greco2007statistical}. 

It is possible to develop change detection algorithms for SAR Multivariate Image Times Series (MITS).
Several approaches exist and those based on a test of equality of covariance matrices generally perform well.
Moreover, they may have the interesting Constant False Alarm Rate (CFAR) property, in particular for Gaussian data~\cite{Conradsen2003,Novak2005,Ciuonzo2017}.
For the CG distribution, the Generalized Likelihood Ratio Test (GLRT) is derived in~\cite{MianTSP2019}.
This detector exhibits very good performance when data are not drawn from a Gaussian distribution.
However, when the number of images $T$ of the MITS is large, the computational time becomes prohibitive for practical implementation.
In this paper, a recursive implementation of this detector is proposed.
Because of the form of the change detector, this implementation cannot be derived easily, for example by employing an arithmetic mean.

To solve the problem, a framework based on a recursive approach as proposed in~\cite{ZS18} is developed and adapted to the CG distribution.
In order to do so, the Riemannian geometry of the CG distribution has to be considered, which, to the best of our knowledge, has not been done previously.
Hence, the main contribution of this paper consists in deriving a well-suited Riemannian geometry for the distribution of interest, \emph{i.e.} metric, geodesics, distance.
It relies on the Fisher information metric of the CG distribution; see \emph{e.g.}~\cite{S05,Breloy2019} for Gaussian and CES cases.
In addition, the Riemannian gradient to recursively estimate the CG parameters of a MITS and the corresponding Intrinsic Cram\'er Rao Bound (ICRB)~\cite{S05} are provided.
Finally, the proposed method is validated on simulated data.


\section{Data Model}
\label{sec:datamodel}

Let a MITS $\{\mathbf{x}_i^{(t)}\}_{i\in\llbracket1,n\rrbracket,t\in\llbracket1,T\rrbracket}$ of $T$ data composed of $n$ samples in $\mathbb{C}^{p}$.
Even though these data follow the same statistical distribution, their parameters might change with $t$.
From this MITS, we want to detect these changes by comparing the parameters of the distribution, denoted $\theta^{(t)}$.
The change detection problem can be written as:
\begin{equation}
    \left\{
        \begin{array}{ll}
          \mathrm{H_0}: \theta^{(1)}=\theta^{(2)}=...=\theta^{(T)}=\theta^{(0)}\\
          \mathrm{H_1}: \exists (t,t')\in\llbracket 1,T\rrbracket^2, \theta^{(t)} \ne \theta^{(t^\prime)}\\
        \end{array}
    \right.
\end{equation}
As shown in~\cite{MianTSP2019}, to reach good performance, it is important that the parameters capture both the power and the correlations of the data.
To ensure this, we propose to use the CG distribution~\cite{Yao73,Gini2002} (also referred to as a mixture of scaled Gaussian).
This model corresponds to a Gaussian one, where each realization $\obs\in\mathbb{C}^p$ is scaled by a local power factor $\tau^{(t)}_i$ referred to as texture sample (assumed unknown deterministic in this work):
\begin{equation}
\obs \sim \mathcal{CN} (   \mathbf{0} , \tau^{(t)}_i \mathbf{\Sigma}^{(t)} )
\end{equation}
For the parameters to be identifiable, a constraint on the covariance $\mathbf{\Sigma}^{(t)}$ is needed.
Most often, a trace constraint $\Tr(\mathbf{\Sigma}^{(t)})=p$ is applied.
However, from a geometrical point of view, it is not the best choice.
In the following, we choose the unitary determinant normalization, advocated in~\cite{paindaveine2008canonical} because it allows to decorrelate the estimation of textures and covariance matrix.
In this paper, we further show that it yields tremendous simplifications in the Fisher information metric.
Thus, $\mathbf{\Sigma}^{(t)}$ belongs to
\begin{equation}
	\mathcal{SH}_{p}^{++} = \left\{ \mathbf{\Sigma}\in \mathcal{H}_{p}^{++}: \; |\mathbf{\Sigma}| = 1 \right\},
\label{eq:DefSubM}
\end{equation}
where $\mathcal{H}_{p}^{++}$ is the manifold of $p\times p$ positive definite matrices.

In~\cite{MianTSP2019}, the GLRT for the CG model is derived and the following detector is obtained:
\begin{equation}
	\hat{\Lambda}_{CG}^{(T)}  = \frac{\left|\hat{\mathbf{\Sigma}}^{(T)}_{0}\right|^{Tn}}{\displaystyle \prod_{t=1}^T \left| {\hat{\mathbf{\Sigma}}_{Tyl}^{(t)}}\right|^n} \displaystyle\prod_{\substack{i =1}}^{\substack{i=n}}  \frac{ \left(\displaystyle \sum_{t=1}^T  \hat{\tau}_{i,0}^{(t)} \right)^{Tp} }{\displaystyle \prod_{t=1}^T\left( \hat{\tau}_i^{(t)} \right)^{p}} \supinf \lambda ,
\label{eq : GLRT Mat Gen}
\end{equation}
where $\hat{\mathbf{\Sigma}}_{Tyl}^{(t)}$ and $\hat{\tau}_i^{(t)}$ are the classical Tyler's estimators of covariance and textures~\cite{Tyler1987,Pascal2008a}:
\begin{equation}
	\hat{\mathbf{\Sigma}}_{Tyl}^{(t)} = \frac{p}{n} \sum_{i=1}^n \frac{\obs \obs^H}{\obs^H (\hat{\mathbf{\Sigma}}_{Tyl}^{(t)})^{-1} \obs}
	\textup{\quad and\quad}
	\hat{\tau}_i^{(t)} = \frac{\obs^H (\hat{\mathbf{\Sigma}}_{Tyl}^{(t)})^{-1} \obs}{p};
	\label{eq:defTyl}
\end{equation} 
$\hat{\mathbf{\Sigma}}^{(T)}_{0}$ and $\hat{\tau}_{i,0}^{(t)}$ are the MLE of the covariance matrix and the textures under the null hypothesis $\mathrm{H_0}$:
\begin{equation}
	\hat{\mathbf{\Sigma}}^{(T)}_{0}  = \frac{p}{n} \sum_{i =1}^{n}  \frac{\displaystyle \sum_{t=1}^T \obs \obs^H}{\displaystyle \sum_{t=1}^T \obs^H (\hat{\mathbf{\Sigma}}_{0}^{(T)})^{-1} \obs}
	\mbox{\quad and\quad}
	\hat{\tau}_{i,0}^{(t)}=\frac{\obs^H (\hat{\mathbf{\Sigma}}^{(T)}_{0})^{-1}\obs}{Tp}.
	\label{eq:defTylerNullHypo}
\end{equation}
This detector features interesting CFAR properties and exhibits better performances when data follow a CG distribution.
Unfortunately, it suffers a large complexity, in particular as $T$ grows.
Moreover, when a new dataset $\{\mathbf{x}_i^{(T+1)}\}$ occurs, it is impossible to compute the new detector $\hat{\Lambda}_{CG}^{(T+1)}$ directly from $\hat{\Lambda}_{CG}^{(T)}$ because:
\begin{equation}
	\hat{\mathbf{\Sigma}}^{(T+1)}_{0} \neq \frac{T\hat{\mathbf{\Sigma}}^{(T)}_{0}+\hat{\mathbf{\Sigma}}_{Tyl}^{(T+1)}}{T+1}
\end{equation}
To avoid the computation of $\hat{\mathbf{\Sigma}}^{(T+1)}_{0}$ with all previous data, an original recursive approach based on Riemannian optimization is proposed.

\section{Riemannian geometry of the compound Gaussian distribution}
\label{sec:newgeodist}

To simplify notations, the superscript $^{(t)}$ is omitted in this section.
In the following, $\boldsymbol{\tau}=[\tau_1\dots\tau_n]^T$, $\theta=(\mathbf{\Sigma},\boldsymbol{\tau})$, $\xi=(\mathbf{\xi}_{\mathbf{\Sigma}},\mathbf{\xi}_{\boldsymbol{\tau}})$ and $\eta=(\mathbf{\eta}_{\mathbf{\Sigma}},\mathbf{\eta}_{\boldsymbol{\tau}})$.
The parameter $\theta$ of the CG distribution lies in the manifold $\mathcal{M}_{p,n}=\mathcal{SH}^{++}_p\times\mathbb{R}_{++}^n$.
Since this is the product of two manifolds, $\mathcal{M}_{p,n}$ is also a manifold (see \emph{e.g.}~\cite{AMS08} for details).
Its tangent space $T_{\theta} \mathcal{M}_{p,n}$ at $\theta$ is $T_{\mathbf{\Sigma}}\mathcal{SH}^{++}_p\times T_{\boldsymbol{\tau}}\mathbb{R}_{++}^n$, where $T_{\mathbf{\Sigma}}\mathcal{SH}^{++}_p$:
\begin{equation}
	T_{\mathbf{\Sigma}}\mathcal{SH}^{++}_p = \{ \mathbf{\xi}_{\mathbf{\Sigma}}\in\mathcal{H}_p : \, \Tr(\mathbf{\Sigma}^{-1}\mathbf{\xi}_{\mathbf{\Sigma}})=0 \}
\label{eq:SHPD_tangent}
\end{equation}
($\mathcal{H}_p$ denotes the space of $p\times p$ Hermitian matrices); and $T_{\boldsymbol{\tau}}\mathbb{R}_{++}^n$ is identified to $\mathbb{R}^n$.

To turn $\mathcal{M}_{p,n}$ into a Riemannian manifold, it must be equiped with a Riemannian metric.
The most natural choice in our case is to consider the Fisher information metric on $\mathcal{M}_{p,n}$ associated with the CG distribution.
It is given in the following proposition.

\begin{proposition}[Fisher information metric]
\label{prop:fisher}
	The Fisher metric of the CG distribution on $\mathcal{M}_{p,n}$ is defined, for $\theta\in\mathcal{M}_{p,n}$ and $\xi,\eta\in T_{\theta}\mathcal{M}_{p,n}$, by, up to a factor,
	\begin{equation*}
		\langle \xi,\eta \rangle^{\mathcal{M}_{p,n}}_{\theta} = \frac1p\langle \mathbf{\xi}_{\mathbf{\Sigma}},\mathbf{\eta}_{\mathbf{\Sigma}} \rangle^{\mathcal{H}^{++}_p}_{\mathbf{\Sigma}} + \frac1n\langle \mathbf{\xi}_{\boldsymbol{\tau}} , \mathbf{\eta}_{\boldsymbol{\tau}} \rangle^{\mathbb{R}_{++}^n}_{\boldsymbol{\tau}},
	\end{equation*}
	with
	$\langle \mathbf{\xi}_{\mathbf{\Sigma}},\mathbf{\eta}_{\mathbf{\Sigma}} \rangle^{\mathcal{H}^{++}_p}_{\mathbf{\Sigma}} = \Tr (\mathbf{\Sigma}^{-1} \mathbf{\xi}_{\mathbf{\Sigma}} \mathbf{\Sigma}^{-1}\mathbf{\eta}_{\mathbf{\Sigma}})$
	and
	$\langle \mathbf{\xi}_{\boldsymbol{\tau}} , \mathbf{\eta}_{\boldsymbol{\tau}} \rangle^{\mathbb{R}_{++}^n}_{\boldsymbol{\tau}}=(\mathbf{\xi}_{\boldsymbol{\tau}}\odot\boldsymbol{\tau}^{\odot-1})^T \mathbf{\eta}_{\boldsymbol{\tau}}\odot\boldsymbol{\tau}^{\odot-1}$, where $\odot$ and $\cdot^{\odot-1}$ denote elementwise product and inversion, respectively.
\end{proposition}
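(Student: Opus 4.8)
The plan is to obtain the metric directly from the definition of the Fisher information as the covariance of the score,
\[
\langle \xi,\eta \rangle^{\mathcal{M}_{p,n}}_{\theta} \;=\; \mathbb{E}\!\left[ (\D\ell[\xi])(\D\ell[\eta]) \right],
\]
where $\ell$ is the log-likelihood of the $n$ independent observations $\mathbf{x}_i\sim\mathcal{CN}(\mathbf{0},\tau_i\mathbf{\Sigma})$ and $\D\ell[\xi]$ is its directional derivative along $\xi\in T_\theta\mathcal{M}_{p,n}$. I would favour this form over the Hessian form $-\mathbb{E}[\D^2\ell]$ because it is coordinate-free and avoids having to handle the Riemannian connection of the constrained factor $\mathcal{SH}^{++}_p$; it also makes transparent why the determinant normalization simplifies everything. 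Using $|\mathbf{\Sigma}|=1$, so that $\log|\mathbf{\Sigma}|=0$, the log-likelihood reduces up to an additive constant to $\ell = -p\sum_i\log\tau_i - \sum_i \tau_i^{-1}\mathbf{x}_i^H\mathbf{\Sigma}^{-1}\mathbf{x}_i$.

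First I would compute the two scores by elementary matrix calculus. With $\D(\mathbf{\Sigma}^{-1})[\xi_{\mathbf{\Sigma}}]=-\mathbf{\Sigma}^{-1}\xi_{\mathbf{\Sigma}}\mathbf{\Sigma}^{-1}$ one gets the covariance score $\sum_i \tau_i^{-1}\mathbf{x}_i^H\mathbf{\Sigma}^{-1}\xi_{\mathbf{\Sigma}}\mathbf{\Sigma}^{-1}\mathbf{x}_i$ and the texture score $\sum_i(-p\tau_i^{-1}+\tau_i^{-2}\mathbf{x}_i^H\mathbf{\Sigma}^{-1}\mathbf{x}_i)\,\xi_{\tau,i}$. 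The key observation is that, on the tangent space \eqref{eq:SHPD_tangent}, one has $\Tr(\mathbf{\Sigma}^{-1}\xi_{\mathbf{\Sigma}})=0$; combined with $\mathbb{E}[\mathbf{x}_i\mathbf{x}_i^H]=\tau_i\mathbf{\Sigma}$ this shows both scores are already centred, so no extra care about the missing $\log|\mathbf{\Sigma}|$ term is needed.

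Next I would take expectations using independence across $i$ (which removes all cross-sample terms) together with the second- and fourth-order moments of the circular complex Gaussian, in particular $\mathbb{E}[(\mathbf{x}_i^H\mathbf{A}\mathbf{x}_i)(\mathbf{x}_i^H\mathbf{B}\mathbf{x}_i)]=\Tr(\mathbf{A}\mathbf{R}_i)\Tr(\mathbf{B}\mathbf{R}_i)+\Tr(\mathbf{A}\mathbf{R}_i\mathbf{B}\mathbf{R}_i)$ with $\mathbf{R}_i=\tau_i\mathbf{\Sigma}$. The decisive point is that every ``disconnected'' factor appearing here reduces to $\Tr(\mathbf{\Sigma}^{-1}\xi_{\mathbf{\Sigma}})=0$. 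This single fact does two things at once: it discards the product-trace term in the covariance block, leaving $n\,\Tr(\mathbf{\Sigma}^{-1}\xi_{\mathbf{\Sigma}}\mathbf{\Sigma}^{-1}\eta_{\mathbf{\Sigma}})$, and it forces the $\mathbf{\Sigma}$--$\boldsymbol{\tau}$ cross block to vanish, which is exactly the decorrelation between textures and covariance promised by the normalization. The texture block is then a routine variance computation, $\mathrm{Var}(\mathbf{x}_i^H\mathbf{\Sigma}^{-1}\mathbf{x}_i)=p\,\tau_i^2$, giving $p\sum_i \tau_i^{-2}\xi_{\tau,i}\eta_{\tau,i}=p\,(\xi_{\boldsymbol{\tau}}\odot\boldsymbol{\tau}^{\odot-1})^T(\eta_{\boldsymbol{\tau}}\odot\boldsymbol{\tau}^{\odot-1})$.

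Collecting the two blocks yields $n\,\langle\xi_{\mathbf{\Sigma}},\eta_{\mathbf{\Sigma}}\rangle^{\mathcal{H}^{++}_p}_{\mathbf{\Sigma}}+p\,\langle\xi_{\boldsymbol{\tau}},\eta_{\boldsymbol{\tau}}\rangle^{\mathbb{R}^n_{++}}_{\boldsymbol{\tau}}$, and factoring out the global constant $np$ gives the stated expression, the ``up to a factor'' being precisely $np$. The main obstacle I anticipate is the bookkeeping in the fourth-moment step: one must apply the complex (not real) Wick formula correctly and track which trace factors collapse under the constraint. This subtlety is underscored by the fact that the naive Hessian computation, if one forgets that the Hessian of $\log|\mathbf{\Sigma}|$ does not vanish on the manifold even though $\log|\mathbf{\Sigma}|$ itself does, would produce a spurious factor in the covariance block.
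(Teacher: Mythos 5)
Your proof is correct, and it reaches the stated metric by a genuinely more elementary route than the paper's. The paper never computes Gaussian moments explicitly: it writes the CG log-likelihood as $\sum_i L_{\textup{G}}\circ\varphi_i$ with $\varphi_i(\theta)=\tau_i\mathbf{\Sigma}$, flips between the score-covariance and Hessian forms of the Fisher information to pass the expectation through the sum, and thereby identifies the CG metric as the sum of pullbacks $\sum_i\langle \D\varphi_i(\theta)[\xi],\D\varphi_i(\theta)[\eta]\rangle^{\mathcal{H}^{++}_p}_{\tau_i\mathbf{\Sigma}}$ of the known Gaussian Fisher (affine-invariant) metric, with $\D\varphi_i(\theta)[\xi]=\xi_{\boldsymbol{\tau}\,i}\mathbf{\Sigma}+\tau_i\mathbf{\xi}_{\mathbf{\Sigma}}$; expanding this quadratic form produces the two diagonal blocks plus cross terms proportional to $\Tr(\mathbf{\Sigma}^{-1}\mathbf{\xi}_{\mathbf{\Sigma}})$, which vanish on $T_{\mathbf{\Sigma}}\mathcal{SH}^{++}_p$. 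You instead evaluate the score covariance from scratch via the complex Wick identity; the identical constraint $\Tr(\mathbf{\Sigma}^{-1}\mathbf{\xi}_{\mathbf{\Sigma}})=0$ is what kills both the disconnected fourth-moment term and the $\mathbf{\Sigma}$--$\boldsymbol{\tau}$ cross block in your version, so the two arguments hinge on the same mechanism. The paper's route buys economy (the moment bookkeeping is delegated to the cited Gaussian result, and the scaling map $\varphi_i$ makes the structural origin of the metric transparent); yours buys self-containedness, an explicit derivation of the factors $n$ and $p$ (hence of the global constant $np$), and the correct caveat that the Hessian form must retain the $\log|\mathbf{\Sigma}|$ term even though it vanishes pointwise on $\mathcal{SH}^{++}_p$. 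Your intermediate claims all check out: the scores are centred, $\mathrm{Var}(\mathbf{x}_i^H\mathbf{\Sigma}^{-1}\mathbf{x}_i)=p\tau_i^2$, and the cross block vanishes, which is exactly the decorrelation property the determinant normalization was chosen to deliver.
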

\begin{proof}
The log-likelihood $L$ on $\mathcal{M}_{p,n}$ for $\theta$ is
\begin{equation}
	L_{\textup{CG}}(\theta) = \sum_i L_{\textup{G}}(\tau_i\mathbf{\Sigma}) = \sum_i L_{\textup{G}}\circ\varphi_i(\theta),
\label{eq:cG_log_likelihood}
\end{equation}
where $L_{\textup{G}}$ is the log-likelihood for the Gaussian distribution, see \emph{e.g.}~\cite{S05}; and $\varphi_i(\theta)=\tau_i\mathbf{\Sigma}$.
By definition and~\cite[Theorem 1]{S05},
\begin{equation*}
	\begin{array}{rcl}
		\langle \xi,\eta \rangle^{\mathcal{M}_{p,n}}_{\theta} & = & \mathbb{E}\left[ \D L_{\textup{CG}}(\theta)[\xi] \D L_{\textup{CG}}(\theta)[\eta] \right]
		= - \mathbb{E}\left[ \D^2 L_{\textup{CG}}(\theta)[\xi,\eta] \right] 
		\\[5pt]
		& = & - \sum_i \mathbb{E}\left[ \D^2 L_{\textup{G}}\circ\varphi_i(\theta)[\xi,\eta] \right]
		\\[5pt]
		& = & \sum_i  \mathbb{E}\left[ \D L_{\textup{G}}\circ\varphi_i(\theta)[\xi] \D L_{\textup{G}}\circ\varphi_i(\theta)[\eta] \right]
		\\[5pt]
		& = & \sum_i \langle \D\varphi_i(\theta)[\xi], \D\varphi_i(\theta)[\eta] \rangle^{\mathcal{H}^{++}_p}_{\varphi_i(\theta)},
	\end{array}
\end{equation*}
where $\D\varphi_i(\theta)[\xi]=\xi_{\boldsymbol{\tau}\,i}\mathbf{\Sigma}+\tau_i\mathbf{\xi}_{\mathbf{\Sigma}}$ is the directional derivative of $\varphi_i$.
Basic manipulations yield, up to a factor,
\begin{multline*}
	\langle \xi,\eta \rangle^{\mathcal{M}_{p,n}}_{\theta} =
	\frac1p\langle \mathbf{\xi}_{\mathbf{\Sigma}}, \mathbf{\eta}_{\mathbf{\Sigma}} \rangle^{\mathcal{H}^{++}_p}_{\mathbf{\Sigma}}
	+ \frac1n\langle \mathbf{\xi}_{\boldsymbol{\tau}} , \mathbf{\eta}_{\boldsymbol{\tau}} \rangle^{\mathbb{R}_{++}^n}_{\boldsymbol{\tau}}
	\\
	+ \frac1{np}\Tr(\mathbf{\Sigma}^{-1}\mathbf{\xi}_{\mathbf{\Sigma}})(\mathbf{\eta}_{\boldsymbol{\tau}}\odot\boldsymbol{\tau}^{-1})^T\mathbf{1}_n
	+ \frac1{np}\Tr(\mathbf{\Sigma}^{-1}\mathbf{\eta}_{\mathbf{\Sigma}})(\mathbf{\xi}_{\boldsymbol{\tau}}\odot\boldsymbol{\tau}^{-1})^T\mathbf{1}_n .
\end{multline*}
	Since $\mathbf{\xi}_{\mathbf{\Sigma}},\mathbf{\eta}_{\mathbf{\Sigma}}\in T_{\mathbf{\Sigma}}\mathcal{SH}^{++}_p$, we have $\Tr(\mathbf{\Sigma}^{-1}\mathbf{\xi}_{\mathbf{\Sigma}})=\Tr(\mathbf{\Sigma}^{-1}\mathbf{\eta}_{\mathbf{\Sigma}}) = 0$, which concludes the proof.
\end{proof}

In the following proposition, the geodesics and Riemannian distance on $\mathcal{M}_{p,n}$ associated with the Fisher information metric $\langle\cdot,\cdot\rangle^{\mathcal{M}_{p,n}}_{\cdot}$ of the CG distribution are provided.
These geometrical objects are sufficient to perform Riemannian optimization and to measure and bound estimation errors.

\begin{proposition}[Geodesics and Riemannian distance]
\label{prop:geo_dist}
	The geodesic on $\mathcal{M}_{p,n}$ is $\gamma^{\mathcal{M}_{p,n}}(t) = ( \gamma^{\mathcal{SH}^{++}_p}(t), \gamma^{\mathbb{R}_{++}^n}(t) )$.
	If $\gamma^{\mathcal{M}_{p,n}}(0)=\theta$ and $\dot{\gamma}^{\mathcal{M}_{p,n}}(0)=\xi$,
	\[
		\gamma^{\mathcal{SH}^{++}_p}(t)=\mathbf{\Sigma}\exp(t\mathbf{\Sigma}^{-1}\VEC{\xi}_{\mathbf{\Sigma}})
		\textup{\quad and\quad}
		\gamma^{\mathbb{R}_{++}^n}(t) = \VEC{\tau}\odot\exp(t\VEC{\tau}^{\odot-1}\odot\VEC{\xi}_{\VEC{\tau}}).
	\]
	If $\gamma^{\mathcal{M}_{p,n}}(0)=\theta_0$ and $\gamma^{\mathcal{M}_{p,n}}(1)=\theta_1$,
	\[
		\gamma^{\mathcal{SH}^{++}_p}(t)=\MAT{\Sigma}_{0}^{1/2}(\MAT{\Sigma}_{0}^{-1/2}\MAT{\Sigma}_{1}\MAT{\Sigma}_{0}^{-1/2})^t\MAT{\Sigma}_{0}^{1/2}
		\textup{\quad and\quad}
		\gamma^{\mathbb{R}_{++}^n}(t)=\VEC{\tau}_{0}^{\odot1-t}\odot\VEC{\tau}_{1}^{\odot t}.
	\]
	It follows that the Riemannian distance on $\mathcal{M}_{p,n}$ corresponding to the Fisher metric of proposition~\ref{prop:fisher} is
	\begin{equation*}
		\delta_{\mathcal{M}_{p,n}}^2(\theta_0,\theta_1) = \frac1p\delta_{\mathcal{H}^{++}_p}^2(\MAT{\Sigma}_{0},\MAT{\Sigma}_{1}) + \frac1n\delta_{\mathbb{R}_{++}^n}^2(\VEC{\tau}_{0},\VEC{\tau}_{1}) ,
	\end{equation*}
where $\delta_{\mathcal{H}^{++}_p}^2(\MAT{\Sigma}_{0},\MAT{\Sigma}_{1})=\| \log(\MAT{\Sigma}_{0}^{-1/2}\MAT{\Sigma}_{1}\MAT{\Sigma}_{0}^{-1/2})\|^2_2$ and $\delta_{\mathbb{R}_{++}^n}^2(\VEC{\tau}_{0},\VEC{\tau}_{1})=\|\log(\VEC{\tau}_{0}^{-1}\odot\VEC{\tau}_{1})\|_2^2$.
\end{proposition}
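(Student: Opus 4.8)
The plan is to exploit the product structure of the Fisher metric established in Proposition~\ref{prop:fisher}. Since $\langle\cdot,\cdot\rangle^{\mathcal{M}_{p,n}}_{\theta}$ contains no cross term between the $\mathbf{\Sigma}$ and $\boldsymbol{\tau}$ components, $\mathcal{M}_{p,n}$ is a Riemannian product of $(\mathcal{SH}^{++}_p,\frac1p\langle\cdot,\cdot\rangle^{\mathcal{H}^{++}_p})$ and $(\mathbb{R}^n_{++},\frac1n\langle\cdot,\cdot\rangle^{\mathbb{R}^n_{++}})$. For such products it is standard (see \emph{e.g.}~\cite{AMS08}) that geodesics are pairs of geodesics of the factors and that squared Riemannian distances add. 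Moreover, multiplying a metric by a positive constant leaves the geodesics, seen as curves, unchanged and merely scales the squared distance by that constant. This reduces the proof to establishing the geodesics and distances of each factor with its \emph{unscaled} metric, then summing the squared distances with weights $1/p$ and $1/n$.

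For the texture factor, the elementwise map $\boldsymbol{\tau}\mapsto\log\boldsymbol{\tau}$ is an isometry onto flat $\mathbb{R}^n$: the metric $(\mathbf{\xi}_{\boldsymbol{\tau}}\odot\boldsymbol{\tau}^{\odot-1})^T(\mathbf{\eta}_{\boldsymbol{\tau}}\odot\boldsymbol{\tau}^{\odot-1})$ pulls back to the Euclidean inner product in log-coordinates. Hence geodesics are straight lines in $\log\boldsymbol{\tau}$, which yields the initial-value form $\boldsymbol{\tau}\odot\exp(t\,\boldsymbol{\tau}^{\odot-1}\odot\mathbf{\xi}_{\boldsymbol{\tau}})$, the endpoint form $\boldsymbol{\tau}_0^{\odot1-t}\odot\boldsymbol{\tau}_1^{\odot t}$, and the squared distance $\|\log(\boldsymbol{\tau}_0^{\odot-1}\odot\boldsymbol{\tau}_1)\|_2^2$, each checked by a direct substitution at $t=0,1$.

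For the covariance factor I would start from the classical affine-invariant geometry of $\mathcal{H}^{++}_p$, whose geodesic and distance are exactly $\mathbf{\Sigma}\exp(t\mathbf{\Sigma}^{-1}\mathbf{\xi}_{\mathbf{\Sigma}})$ and $\|\log(\mathbf{\Sigma}_0^{-1/2}\mathbf{\Sigma}_1\mathbf{\Sigma}_0^{-1/2})\|_2^2$, the endpoint form following by solving for the initial velocity. The one point requiring genuine verification, and which I expect to be the main obstacle, is that these formulas persist on the unit-determinant submanifold $\mathcal{SH}^{++}_p$, \emph{i.e.} that $\mathcal{SH}^{++}_p$ is totally geodesic in $\mathcal{H}^{++}_p$. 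For this I would check that a geodesic issued from $\mathbf{\Sigma}\in\mathcal{SH}^{++}_p$ with velocity $\mathbf{\xi}_{\mathbf{\Sigma}}\in T_{\mathbf{\Sigma}}\mathcal{SH}^{++}_p$ stays in $\mathcal{SH}^{++}_p$: from $|\gamma(t)|=|\mathbf{\Sigma}|\exp(t\,\Tr(\mathbf{\Sigma}^{-1}\mathbf{\xi}_{\mathbf{\Sigma}}))$ together with the tangency condition $\Tr(\mathbf{\Sigma}^{-1}\mathbf{\xi}_{\mathbf{\Sigma}})=0$ of~\eqref{eq:SHPD_tangent}, one gets $|\gamma(t)|=1$ for all $t$. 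An ambient geodesic that remains inside the submanifold is automatically a geodesic of the submanifold for the induced metric, so by uniqueness of geodesics the above formula is the geodesic of $\mathcal{SH}^{++}_p$, and its induced distance is the restriction of the ambient one.

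Finally I would assemble the pieces. The product geodesic $\gamma^{\mathcal{M}_{p,n}}(t)=(\gamma^{\mathcal{SH}^{++}_p}(t),\gamma^{\mathbb{R}^n_{++}}(t))$ is the pair of factor geodesics just obtained, and by the product and scaling structure the squared distance is $\frac1p\delta_{\mathcal{H}^{++}_p}^2(\mathbf{\Sigma}_0,\mathbf{\Sigma}_1)+\frac1n\delta_{\mathbb{R}^n_{++}}^2(\boldsymbol{\tau}_0,\boldsymbol{\tau}_1)$ with the two factor distances recalled above; uniqueness of minimizing geodesics (both factors being Hadamard, respectively flat) guarantees that the stated geodesics are length-minimizing so that the distance indeed equals their length.
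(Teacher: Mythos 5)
Your proposal is correct and follows the same route as the paper: both reduce the problem to the factor geometries via the product structure of the Fisher metric, for which geodesics are pairs of factor geodesics and squared distances add with the weights $1/p$ and $1/n$. The paper simply asserts the factor geodesics and distances as known, whereas you additionally verify them — notably that $\mathcal{SH}^{++}_p$ is totally geodesic in $\mathcal{H}^{++}_p$, via the determinant of the ambient geodesic and the tangency condition $\Tr(\mathbf{\Sigma}^{-1}\mathbf{\xi}_{\mathbf{\Sigma}})=0$, and the log-coordinate isometry for the texture factor — details the paper leaves implicit.
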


\begin{proof}
The geodesics $\gamma^{\mathcal{SH}^{++}_p}(t)$ and $\gamma^{\mathbb{R}_{++}^n}(t)$ are the geodesics on $\mathcal{SH}^{++}_p$ and $\mathbb{R}_{++}^n$ equiped with $\langle\cdot,\cdot\rangle^{\mathcal{H}^{++}_p}_{\cdot}$ and $\langle\cdot,\cdot\rangle^{\mathbb{R}_{++}^n}_{\cdot}$, respectively.
Therefore, by definition of $\langle\cdot,\cdot\rangle^{\mathcal{M}_{p,n}}_{\cdot}$ and from the properties of product manifolds, $\gamma^{\mathcal{M}_{p,n}}$ is the geodesic on $\mathcal{M}_{p,n}$.
Similarly, $\delta_{\mathcal{H}^{++}_p}^2$ and $\delta_{\mathbb{R}_{++}^n}^2$ are the Riemannian distances associated with $\langle\cdot,\cdot\rangle^{\mathcal{H}^{++}_p}_{\cdot}$ and $\langle\cdot,\cdot\rangle^{\mathbb{R}_{++}^n}_{\cdot}$.
Thus, by definition of $\langle\cdot,\cdot\rangle^{\mathcal{M}_{p,n}}_{\cdot}$, $\delta_{\mathcal{M}_{p,n}}^2$ is the associated Riemannian distance on $\mathcal{M}_{p,n}$.
\end{proof}

\section{Application to recursive change detection}
\label{sec:RecCD}

Given a new data at $t+1$ $\{\mathbf{x}_i^{(t+1)}\}_i$, to obtain the CG change detector $\hat{\Lambda}_{CG}^{(t+1)}$ defined in~\eqref{eq : GLRT Mat Gen}, one needs to compute: $\hat{\theta}_{Tyl}^{(t+1)}=(\mathbf{\hat{\Sigma}}_{Tyl}^{(t+1)}, \boldsymbol{\hat{\tau}}_{Tyl}^{(t+1)})$ and $\hat{\theta}_{0}^{(t+1)}=(\mathbf{\hat{\Sigma}}_{0}^{(t+1)}, \boldsymbol{\hat{\tau}}_{0}^{(t+1)})$ defined in~\eqref{eq:defTyl} and~\eqref{eq:defTylerNullHypo}.
The complexity of the computation of $\hat{\theta}_{0}^{(t+1)}$ with usual techniques is quite high.
To solve this issue, a recursive implementation, obtained by exploiting the Riemannian derivation studied in~\cite{ZS18}, is proposed.
To estimate $\hat{\theta}_{0}^{(t+1)}$, we only use the information provided by $\hat{\theta}_{0}^{(t)}$ and the log-likelihood of the new data $\{\mathbf{x}_i^{(t+1)}\}_i$, which is
\begin{equation}
	L_{CG}^{(t+1)}(\theta) = \sum_i -p\log(\VEC{\tau}_i) - \frac{(\mathbf{x}_i^{(t+1)})^H\MAT{\Sigma}^{-1}\mathbf{x}_i^{(t+1)}}{\VEC{\tau}_i}.
\end{equation}
The recursive algorithm returning the sequence of estimates $\{\theta_0^{(t)}\}_t$ corresponding to the sequence of data $\{\mathbf{x}_i^{(t)}\}_{i,t}$ is given in Algorithm~\ref{algo:stepsize_decreasing}.
This algorithm relies on:
(\emph{i}) the Riemannian exponential map $\exp^{\mathcal{M}_{p,n}}_{\theta}:T_{\theta}\mathcal{M}_{p,n}\to\mathcal{M}_{p,n}$, such that $\exp^{\mathcal{M}_{p,n}}_{\theta}(\xi)=\gamma^{\mathcal{M}_{p,n}}(1)$, where $\gamma^{\mathcal{M}_{p,n}}$ is defined in Proposition~\ref{prop:geo_dist};
(\emph{ii}) the Riemannian gradient of $L_{CG}^{(t)}$, provided in Proposition~\ref{prop:grad}.

\begin{algorithm}
\KwIn{$\{\VEC{x}_i^{(t)}\}_{i,t}$, initialization $\theta^{(0)}\in\mathcal{M}_{p,n}$, initial stepsize $\alpha_0>0$}
\KwOut{$\{\theta^{(t)}\}_t$ in $\mathcal{M}_{p,n}$}
\For{$t=0$ \textbf{to} $T$}{
	$\theta^{(t+1)} = \exp^{\mathcal{M}_{p,n}}_{\theta^{(t)}}\left(\frac{\alpha_0}{t+1}\grad_{\mathcal{M}_{p,n}}L_{CG}^{(t+1)}(\theta^{(t)})\right)$
}
\caption{Recursive estimation of CG parameters in $\mathcal{M}_{p,n}$}
\label{algo:stepsize_decreasing}
\end{algorithm}

\begin{proposition}[Gradient of the parameters of CG distribution]
\label{prop:grad}
	The Riemannian gradient $\grad_{\mathcal{M}_{p,n}}L_{CG}^{(t)}(\theta)$ at $\theta\in\mathcal{M}_{p,n}$ is
	\begin{equation*}
		\grad_{\mathcal{M}_{p,n}}L_{CG}^{(t)}(\theta) = \left( \sum_i \frac{p\,\obs(\obs)^H - (\obs)^H\mathbf{\Sigma}^{-1}\obs \, \mathbf{\Sigma}}{\VEC{\tau}_i}, n(\VEC{a} - p\VEC{\tau}) \right)
	\end{equation*}
	where, for $1\leq i\leq n$, $\VEC{a}_i = (\obs)^H\mathbf{\Sigma}^{-1}\obs$.
\end{proposition}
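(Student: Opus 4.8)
The plan is to invoke the defining property of the Riemannian gradient: $\grad_{\mathcal{M}_{p,n}}L_{CG}^{(t)}(\theta)$ is the unique tangent vector for which $\langle \grad_{\mathcal{M}_{p,n}}L_{CG}^{(t)}(\theta), \xi\rangle^{\mathcal{M}_{p,n}}_{\theta} = \D L_{CG}^{(t)}(\theta)[\xi]$ holds for every $\xi=(\VEC{\xi}_{\MAT{\Sigma}},\VEC{\xi}_{\VEC{\tau}})\in T_{\theta}\mathcal{M}_{p,n}$. Since both the metric of Proposition~\ref{prop:fisher} and the tangent space split as an orthogonal product over the $\MAT{\Sigma}$ and $\VEC{\tau}$ factors, I would treat the two blocks of the gradient independently and then reassemble.

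First I would handle the texture part. Differentiating $L_{CG}^{(t)}$ with respect to $\VEC{\tau}$ gives, as the coefficient of the $i$-th component of $\VEC{\xi}_{\VEC{\tau}}$, the scalar $(\VEC{a}_i-p\VEC{\tau}_i)/\VEC{\tau}_i^2$, where $\VEC{a}_i=(\obs)^H\MAT{\Sigma}^{-1}\obs$ is held fixed in this derivative. Matching this against the texture metric $\frac1n(\VEC{g}_{\VEC{\tau}}\odot\VEC{\tau}^{\odot-1})^T(\VEC{\xi}_{\VEC{\tau}}\odot\VEC{\tau}^{\odot-1})$, whose $i$-th weight is likewise $1/\VEC{\tau}_i^2$, the $\VEC{\tau}_i^2$ factors cancel and I read off directly $\VEC{g}_{\VEC{\tau}}=n(\VEC{a}-p\VEC{\tau})$.

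For the covariance part I would first compute the unconstrained gradient with respect to the affine-invariant metric $\langle\cdot,\cdot\rangle^{\mathcal{H}^{++}_p}_{\MAT{\Sigma}}$ on all of $\mathcal{H}_p$, and only afterwards project onto $T_{\MAT{\Sigma}}\mathcal{SH}^{++}_p$. Using $\D(\MAT{\Sigma}^{-1})[\VEC{\xi}_{\MAT{\Sigma}}]=-\MAT{\Sigma}^{-1}\VEC{\xi}_{\MAT{\Sigma}}\MAT{\Sigma}^{-1}$ and the cyclic property of the trace, the directional derivative in the $\MAT{\Sigma}$ direction is $\sum_i\VEC{\tau}_i^{-1}\Tr(\MAT{\Sigma}^{-1}\obs(\obs)^H\MAT{\Sigma}^{-1}\VEC{\xi}_{\MAT{\Sigma}})$; matching this against $\frac1p\Tr(\MAT{\Sigma}^{-1}\tilde{\VEC{g}}_{\MAT{\Sigma}}\MAT{\Sigma}^{-1}\VEC{\xi}_{\MAT{\Sigma}})$ yields the Euclidean gradient $\tilde{\VEC{g}}_{\MAT{\Sigma}}=p\sum_i\VEC{\tau}_i^{-1}\obs(\obs)^H$.

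The crucial step, and the one I expect to be the main obstacle, is the projection onto the unit-determinant tangent space. The key observation is that the constraint $\Tr(\MAT{\Sigma}^{-1}\VEC{\xi}_{\MAT{\Sigma}})=0$ defining $T_{\MAT{\Sigma}}\mathcal{SH}^{++}_p$ in~\eqref{eq:SHPD_tangent} can be rewritten as $\langle\MAT{\Sigma},\VEC{\xi}_{\MAT{\Sigma}}\rangle^{\mathcal{H}^{++}_p}_{\MAT{\Sigma}}=0$, so the normal space is the line spanned by $\MAT{\Sigma}$ itself, with $\langle\MAT{\Sigma},\MAT{\Sigma}\rangle^{\mathcal{H}^{++}_p}_{\MAT{\Sigma}}=\Tr(\MAT{\Sigma}^{-1}\MAT{\Sigma})=p$. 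The orthogonal projection of a Hermitian matrix $H$ is therefore $H-p^{-1}\Tr(\MAT{\Sigma}^{-1}H)\MAT{\Sigma}$. Applying this to $\tilde{\VEC{g}}_{\MAT{\Sigma}}$ and using $\Tr(\MAT{\Sigma}^{-1}\obs(\obs)^H)=\VEC{a}_i$, the subtracted term contributes exactly $-\VEC{a}_i\MAT{\Sigma}/\VEC{\tau}_i$ inside the sum, giving $\VEC{g}_{\MAT{\Sigma}}=\sum_i\VEC{\tau}_i^{-1}\big(p\,\obs(\obs)^H-(\obs)^H\MAT{\Sigma}^{-1}\obs\,\MAT{\Sigma}\big)$. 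Pairing the two blocks $(\VEC{g}_{\MAT{\Sigma}},\VEC{g}_{\VEC{\tau}})$ then yields the claimed expression.
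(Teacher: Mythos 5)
Your proposal is correct and follows essentially the same route as the paper's proof: identify the gradient via $\langle \grad L,\xi\rangle = \D L[\xi]$, read off the texture block directly from the metric, and obtain the covariance block by projecting the ambient gradient $p\sum_i \VEC{\tau}_i^{-1}\obs(\obs)^H$ onto $T_{\MAT{\Sigma}}\mathcal{SH}^{++}_p$ via $H\mapsto H-p^{-1}\Tr(\MAT{\Sigma}^{-1}H)\MAT{\Sigma}$. You actually supply slightly more detail than the paper, which cites the projection formula and leaves the final algebra as ``one can check,'' whereas you justify the projection by identifying the normal space as $\mathrm{span}(\MAT{\Sigma})$ and carry the computation through.
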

\begin{proof}
By definition~\cite{AMS08}, for all $\xi\in T_{\theta}\mathcal{M}_{p,n}$,
$\langle \grad_{\mathcal{M}_{p,n}}L_{CG}^{(t)}(\theta), \xi \rangle^{\mathcal{M}_{p,n}}_{\theta} = \D L_{CG}^{(t)}(\theta)[\xi]$.
We have
\begin{equation*}
	\begin{array}{rcl}
		\D L_{CG}^{(t)}(\theta)[\xi]
		& = & \sum_i \frac{(\obs)^H\mathbf{\Sigma}^{-1}\obs - p\VEC{\tau}_i}{\VEC{\tau}_i^2}\VEC{\xi}_{\VEC{\tau}\,i}
		+ \frac{(\obs)^H\mathbf{\Sigma}^{-1}\MAT{\xi}_{\mathbf{\Sigma}}\mathbf{\Sigma}^{-1}\obs}{\VEC{\tau}_i}
		\\
		& = & \frac1n\langle n(\VEC{a} - p\VEC{\tau}), \VEC{\xi}_{\VEC{\tau}}\rangle^{\mathbb{R}_{++}^n}_{\VEC{\tau}}
		+ \frac1p\langle p\sum_i \frac{\obs(\obs)^H}{\VEC{\tau}_i}, \MAT{\xi}_{\mathbf{\Sigma}} \rangle^{\mathcal{H}^{++}_p}_{\mathbf{\Sigma}}.
	\end{array}
\end{equation*}
It remains to project $p\sum_i \frac{\obs(\obs)^H}{\VEC{\tau}_i}$ on the tangent space $T_{\mathbf{\Sigma}}\mathcal{SH}^{++}_{p}$.
This is achieved by using $P^{\mathcal{SH}^{++}_p}_{\MAT{\Sigma}}(\MAT{\xi}_{\MAT{\Sigma}}) = \mbox{herm}(\MAT{\xi}_{\MAT{\Sigma}}) - \frac{1}{p} \Tr(\mathbf{\Sigma}^{-1}\MAT{\xi}_{\MAT{\Sigma}})\mathbf{\Sigma}$ (see \emph{e.g.}~\cite{Breloy2019}).
One can check that it yields the proposed gradient.
\end{proof}

The Riemannian distance in Proposition~\ref{prop:geo_dist} can be used to measure the error contained in an unbiased estimator $\hat{\theta}^{(T)}$ of the parameter $\theta^{(T)}$ corresponding to a MITS with $T$ data.
Exploiting the same framework as in~\cite{S05,Breloy2019}, the corresponding ICRB is provided in the following proposition.

\begin{proposition}[ICRB]
	Given an unbiased estimator $\hat{\theta}^{(T)}$ of $\theta^{(T)}$ corresponding to a MITS with $T$ data, the ICRB corresponding to the error measured with the Riemannian distance in Proposition~\ref{prop:geo_dist} is
	\begin{equation*}
		\mathbb{E}[\delta_{\mathcal{M}_{p,n}}^2(\theta^{(T)},\hat{\theta}^{(T)})] \leq \frac{p^2-1+n}{Tpn}
	\end{equation*}
\end{proposition}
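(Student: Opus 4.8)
The plan is to apply the intrinsic Cram\'er--Rao framework of~\cite{S05} (as used for the Gaussian and CES cases in~\cite{Breloy2019}): for an unbiased estimator on a Riemannian manifold, the expected squared Riemannian distance is controlled, to leading order, by $\Tr(G^{-1})$, where $G$ is the Fisher information metric written in a basis orthonormal for the metric that defines the distance $\delta_{\mathcal{M}_{p,n}}$. Since the distance of Proposition~\ref{prop:geo_dist} and the Fisher metric of Proposition~\ref{prop:fisher} are proportional, the whole argument reduces to three bookkeeping steps: fix the proportionality constant, multiply by $T$, and count the dimension of $\mathcal{M}_{p,n}$.

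First I would recover the scalar factor left implicit in Proposition~\ref{prop:fisher}. Reinstating the constants suppressed by the ``up to a factor'' in its proof, the genuine single-epoch Fisher metric is $n\langle\VEC{\xi}_{\mathbf{\Sigma}},\VEC{\eta}_{\mathbf{\Sigma}}\rangle^{\mathcal{H}^{++}_p}_{\mathbf{\Sigma}} + p\langle\VEC{\xi}_{\VEC{\tau}},\VEC{\eta}_{\VEC{\tau}}\rangle^{\mathbb{R}_{++}^n}_{\VEC{\tau}} = np\,\langle\cdot,\cdot\rangle^{\mathcal{M}_{p,n}}_{\theta}$, that is, exactly $np$ times the metric whose distance appears in Proposition~\ref{prop:geo_dist}. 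I would obtain this by expanding $\langle\D\varphi_i(\theta)[\xi],\D\varphi_i(\theta)[\eta]\rangle^{\mathcal{H}^{++}_p}_{\varphi_i(\theta)}$ at $\varphi_i(\theta)=\tau_i\mathbf{\Sigma}$, summing over $i$, and discarding the cross terms that vanish on $T_{\mathbf{\Sigma}}\mathcal{SH}^{++}_p$.

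Next, under $\mathrm{H_0}$ the $T$ epochs carry the same $\theta^{(T)}$ and are independent, so their log-likelihoods, and hence their Fisher informations, add: the total Fisher metric is $G = Tnp\,\langle\cdot,\cdot\rangle^{\mathcal{M}_{p,n}}_{\theta}$. In a basis orthonormal for $\langle\cdot,\cdot\rangle^{\mathcal{M}_{p,n}}_{\theta}$ this reads $G = Tnp\,I_d$ with $d = \dim\mathcal{M}_{p,n}$, so $\Tr(G^{-1}) = d/(Tnp)$. The dimension splits over the product: $\dim\mathcal{SH}^{++}_p = p^2-1$ (the $p^2$ real dimensions of $\mathcal{H}_p$ minus the single unit-determinant constraint) and $\dim\mathbb{R}_{++}^n = n$, so $d = p^2-1+n$, which delivers the announced value $\frac{p^2-1+n}{Tpn}$.

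The main obstacle is the first step: the entire numerical value of the bound hinges on recovering the exact factor $np$ that Proposition~\ref{prop:fisher} discards, so this constant must be tracked carefully through the Gaussian Fisher information and the chain rule for $\varphi_i$. A secondary point is to confirm that the intrinsic CRB of~\cite{S05} applies in its leading-order form here---i.e.\ that the curvature corrections on this product of a determinant-one positive-definite manifold and $\mathbb{R}_{++}^n$ are of higher order or otherwise controlled---so that $\Tr(G^{-1})$ alone yields the stated bound.
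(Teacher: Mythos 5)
Your proposal is correct and follows essentially the same route as the paper: the paper's proof simply states that the Fisher information matrix is $\MAT{F}=Tpn\,\MAT{I}_{p^2-1+n}$ in a basis orthonormal for the metric of Proposition~\ref{prop:fisher}, so that $\Tr(\MAT{F}^{-1})=\frac{p^2-1+n}{Tpn}$ gives the bound via the intrinsic Cram\'er--Rao framework of~\cite{S05}. Your version merely makes explicit the two bookkeeping points the paper leaves implicit (recovering the factor $np$ suppressed by the ``up to a factor,'' and the dimension count $p^2-1+n$), both of which you carry out correctly.
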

\begin{proof}
	By definition of $\langle\cdot,\cdot,\rangle^{\mathcal{M}_{p,n}}_{\cdot}$ in Proposition~\ref{prop:fisher}, the Fisher information matrix is $\MAT{F} = Tpn\MAT{I}_{p^2-1+n}$.
	Thus, $\Tr(\MAT{F}^{-1})=\frac{p^2-1+n}{Tpn}$, which is enough to conclude.
\end{proof}

\section{Numerical simulations}
\label{sec:simu}

Given $T$ data, the performance of the CG change detector~\eqref{eq : GLRT Mat Gen} under the null hypothesis greatly depends on the quality of the estimator $\theta_0^{(T)}$.
In this numerical experiment, we compare the performance of the three following estimators:
\begin{itemize}
	\item The MLE $\hat{\theta}_{mle}$, which features the best performance but is computationally expensive.
	\item The arithmetic mean $\hat{\theta}_{art}$, such that $\hat{\theta}^{(t+1)}_{art} = \frac{t\hat{\theta}^{(t)}_{art}+\hat{\mathbf{\theta}}^{(t+1)}_{Tyl}}{t+1}$, where $\hat{\mathbf{\theta}}^{(t+1)}_{Tyl}$ is Tyler's estimator~\eqref{eq:defTyl} of $\{\mathbf{x}_i^{(t+1)}\}_i$.
	\item The recursive estimation $\hat{\theta}_{rec}$ proposed in Algorithm~\ref{algo:stepsize_decreasing} with $\alpha_0=1/pn$.
\end{itemize}


Simulated data $\{ \mathbf{x}_i^{(t)}\}_{i,t}$ of size $p=10$, $n\in\{20,50\}$, $T\in\llbracket1,1000\rrbracket$ are drawn from a $K$-distribution.
Textures $\VEC{\tau}$ follow a $\Gamma$ distribution with parameters $\alpha=\beta=1$.
The covariance matrix is generated as $\mathbf{\Sigma}=\mathbf{U}\mathbf{\Lambda}\mathbf{U}^H$, where $\mathbf{U}$ is a random unitary matrix drawn from a normal distribution and $\mathbf{\Lambda}$ is a random diagonal positive definite matrix with unitary determinant drawn from a chi-squared distirbution.


In Figure~\ref{fig:MSE}, we observe that, as expected, the MLE features the best performance and quicly reaches the ICRB as $T$ grows.
The arithmetic mean has good performance for small values of $T$ but reaches a minimal floor, thus displaying poor performance for large $T$.
Finally, our proposed method works quite well: it reaches the optimal performance as $T$ grows.
Moreover, it has the smallest complexity as only one iteration is needed for each new incoming data.


\begin{figure*}[!t]
    \centering
    \begin{subfigure}{0.5\textwidth}
        \centering
%
%
\begin{tikzpicture}

\begin{axis}[%
width=0.75\textwidth,
height=0.75\textwidth,
at={(0,0)},
scale only axis,
xmode=log,
xmin=1,
xmax=1000,
xminorticks=true,
xlabel style={font=\color{white!15!black}},
xlabel={$T$},
ymin=-33,
ymax=0.6,
ylabel style={font=\color{white!15!black}},
ylabel={\footnotesize $\delta^2_{\mathcal{M}_{p,n}}(\theta,\hat{\theta})$ (dB)},
axis background/.style={fill=white},
title style={font=\bfseries},
title={$p=10$ and $n=20$},
xmajorgrids,
xminorgrids,
ymajorgrids,
legend style={legend cell align=left, align=left, draw=white!15!black},
legend style={at={(0.05,0.4)}, anchor=north west, draw=white!15!black, fill=white}
]
\addplot [color=blue, mark=asterisk, mark options={solid, blue}]
  table[row sep=crcr]{%
1	0.293145808881279\\
3	-6.26753154620261\\
10	-12.1169664589373\\
30	-16.6522672837745\\
100	-22.2387701465768\\
300	-26.7828248285721\\
1000	-32.045400099648\\
};
\addlegendentry{\tiny $\hat{\theta}_{mle}$}

\addplot [color=red, mark=square, mark options={solid, red}]
  table[row sep=crcr]{%
1	0.293135838776107\\
3	-5.24888025369078\\
10	-9.66439819550252\\
30	-11.2729146560004\\
100	-11.3741174060271\\
300	-10.984140625743\\
1000	-10.8137607101551\\
};
\addlegendentry{\tiny $\hat{\theta}_{art}$}

\addplot [color=green, mark=o, mark options={solid, green}]
  table[row sep=crcr]{%
1	0.293135838776107\\
3	-2.8208961223976\\
10	-8.59369267820261\\
30	-14.2701340714845\\
100	-20.9066184347642\\
300	-26.3002140018111\\
1000	-31.9008911625738\\
};
\addlegendentry{\tiny $\hat{\theta}_{rec}$}

\addplot [color=black]
  table[row sep=crcr]{%
1	-2.2548303427145\\
11	-12.6687571942968\\
21	-15.4770232900537\\
31	-17.1684472810572\\
41	-18.3826689099119\\
51	-19.3305321036939\\
61	-20.1081286928222\\
71	-20.7674138299053\\
81	-21.339680531501\\
91	-21.8452442659254\\
101	-22.2980440805409\\
111	-22.7080601305811\\
121	-23.082684045879\\
131	-23.4275432992721\\
141	-23.7470214692683\\
151	-24.0445998156462\\
161	-24.323089103033\\
171	-24.584791446636\\
181	-24.8316160914063\\
191	-25.0651640151918\\
201	-25.2867909169194\\
211	-25.4976548956914\\
221	-25.6987530795656\\
231	-25.8909501416359\\
241	-26.0750007684632\\
251	-26.2515675575249\\
261	-26.4212354160973\\
271	-26.5845232514586\\
281	-26.7418935417653\\
291	-26.8937602325736\\
301	-27.0404952986529\\
311	-27.1824342329829\\
321	-27.3198806667632\\
331	-27.4531102804717\\
341	-27.5823741326395\\
351	-27.7079015073727\\
361	-27.8299023617711\\
371	-27.948569438865\\
381	-28.0640800994707\\
391	-28.1765979166732\\
401	-28.2862740689163\\
411	-28.3932485614752\\
421	-28.4976513010712\\
431	-28.5996030443218\\
441	-28.6992162373929\\
451	-28.7965957614941\\
461	-28.891839596611\\
471	-28.9850394140035\\
481	-29.0762811064528\\
491	-29.1656452639442\\
501	-29.253207601387\\
511	-29.3390393440616\\
521	-29.4232075757098\\
531	-29.5057755535292\\
541	-29.5868029937802\\
551	-29.6663463312324\\
561	-29.7444589552761\\
571	-29.821191425173\\
581	-29.8965916666178\\
591	-29.9707051515271\\
601	-30.0435750627419\\
611	-30.11524244514\\
621	-30.1857463444803\\
631	-30.2551239351559\\
641	-30.3234106379027\\
651	-30.3906402283964\\
661	-30.4568449375709\\
671	-30.5220555444044\\
681	-30.5863014618424\\
691	-30.6496108164565\\
701	-30.7120105223811\\
711	-30.7735263500122\\
721	-30.8341829899088\\
731	-30.8940041122931\\
741	-30.9530124225078\\
751	-31.0112297127562\\
761	-31.0686769104202\\
771	-31.1253741232241\\
781	-31.1813406814875\\
791	-31.2365951776913\\
801	-31.2911555035569\\
811	-31.3450388848261\\
821	-31.3982619139089\\
831	-31.4508405805556\\
841	-31.5027903006936\\
851	-31.5541259435604\\
861	-31.6048618572511\\
871	-31.6550118927911\\
881	-31.704589426835\\
891	-31.7536073830832\\
901	-31.8020782525051\\
911	-31.8500141124445\\
921	-31.897426644683\\
931	-31.9443271525279\\
941	-31.9907265769871\\
951	-32.0366355120886\\
961	-32.0820642194\\
971	-32.1270226417946\\
981	-32.171520416514\\
991	-32.2155668875673\\
};
\addlegendentry{\tiny ICRB}

\end{axis}
\end{tikzpicture}%
    \end{subfigure}%
    \begin{subfigure}{0.5\textwidth}
        \centering
%
%
\begin{tikzpicture}

\begin{axis}[%
width=0.75\textwidth,
height=0.75\textwidth,
scale only axis,
xmode=log,
xmin=1,
xmax=1000,
xminorticks=true,
xlabel style={font=\color{white!15!black}},
xlabel={$T$},
ymin=-36,
ymax=-4,
ylabel style={font=\color{white!15!black}},
ylabel={\footnotesize $\delta^2_{\mathcal{M}_{p,n}}(\theta,\hat{\theta})$ (dB)},
axis background/.style={fill=white},
title style={font=\bfseries},
title={$p=10$ and $n=50$},
xmajorgrids,
xminorgrids,
ymajorgrids,
legend style={legend cell align=left, align=left, draw=white!15!black},
legend style={at={(0.05,0.4)}, anchor=north west, draw=white!15!black, fill=white}
]
\addplot [color=blue, mark=asterisk, mark options={solid, blue}]
  table[row sep=crcr]{%
1	-4.21213955259716\\
3	-9.72063505665841\\
10	-15.5523315405972\\
30	-19.9781974765823\\
100	-25.0988983735224\\
300	-29.7309463425868\\
1000	-35.2951219598952\\
};
\addlegendentry{\tiny $\hat{\theta}_{mle}$}

\addplot [color=red, mark=square, mark options={solid, red}]
  table[row sep=crcr]{%
1	-4.2121436341604\\
3	-9.3892453810683\\
10	-14.5137951795033\\
30	-18.7813682328672\\
100	-22.738762928187\\
300	-25.6193177061597\\
1000	-27.0989667081281\\
};
\addlegendentry{\tiny $\hat{\theta}_{art}$}

\addplot [color=green, mark=o, mark options={solid, green}]
  table[row sep=crcr]{%
1	-4.2121436341604\\
3	-8.67609928972059\\
10	-14.6743735612683\\
30	-19.572011945675\\
100	-24.8634195505881\\
300	-29.6864197423387\\
1000	-35.2887270139335\\
};
\addlegendentry{\tiny $\hat{\theta}_{rec}$}

\addplot [color=black]
  table[row sep=crcr]{%
1	-5.25783735923745\\
11	-15.6717642108197\\
21	-18.4800303065766\\
31	-20.1714542975802\\
41	-21.3856759264348\\
51	-22.3335391202168\\
61	-23.1111357093451\\
71	-23.7704208464282\\
81	-24.3426875480239\\
91	-24.8482512824484\\
101	-25.3010510970639\\
111	-25.711067147104\\
121	-26.0856910624019\\
131	-26.4305503157951\\
141	-26.7500284857912\\
151	-27.0476068321691\\
161	-27.3260961195559\\
171	-27.587798463159\\
181	-27.8346231079293\\
191	-28.0681710317147\\
201	-28.2897979334423\\
211	-28.5006619122144\\
221	-28.7017600960886\\
231	-28.8939571581589\\
241	-29.0780077849861\\
251	-29.2545745740478\\
261	-29.4242424326203\\
271	-29.5875302679815\\
281	-29.7449005582882\\
291	-29.8967672490965\\
301	-30.0435023151759\\
311	-30.1854412495058\\
321	-30.3228876832862\\
331	-30.4561172969946\\
341	-30.5853811491624\\
351	-30.7109085238957\\
361	-30.832909378294\\
371	-30.9515764553879\\
381	-31.0670871159936\\
391	-31.1796049331961\\
401	-31.2892810854393\\
411	-31.3962555779981\\
421	-31.5006583175941\\
431	-31.6026100608448\\
441	-31.7022232539158\\
451	-31.7996027780171\\
461	-31.8948466131339\\
471	-31.9880464305264\\
481	-32.0792881229758\\
491	-32.1686522804671\\
501	-32.2562146179099\\
511	-32.3420463605846\\
521	-32.4262145922327\\
531	-32.5087825700521\\
541	-32.5898100103031\\
551	-32.6693533477553\\
561	-32.7474659717991\\
571	-32.8241984416959\\
581	-32.8995986831408\\
591	-32.97371216805\\
601	-33.0465820792648\\
611	-33.118249461663\\
621	-33.1887533610033\\
631	-33.2581309516788\\
641	-33.3264176544256\\
651	-33.3936472449194\\
661	-33.4598519540939\\
671	-33.5250625609274\\
681	-33.5893084783653\\
691	-33.6526178329794\\
701	-33.715017538904\\
711	-33.7765333665351\\
721	-33.8371900064317\\
731	-33.897011128816\\
741	-33.9560194390307\\
751	-34.0142367292791\\
761	-34.0716839269432\\
771	-34.128381139747\\
781	-34.1843476980104\\
791	-34.2396021942142\\
801	-34.2941625200798\\
811	-34.348045901349\\
821	-34.4012689304319\\
831	-34.4538475970786\\
841	-34.5057973172166\\
851	-34.5571329600833\\
861	-34.607868873774\\
871	-34.6580189093141\\
881	-34.7075964433579\\
891	-34.7566143996062\\
901	-34.8050852690281\\
911	-34.8530211289674\\
921	-34.9004336612059\\
931	-34.9473341690509\\
941	-34.99373359351\\
951	-35.0396425286116\\
961	-35.0850712359229\\
971	-35.1300296583175\\
981	-35.1745274330369\\
991	-35.2185739040902\\
};
\addlegendentry{\tiny ICRB}

\end{axis}
\end{tikzpicture}%
    \end{subfigure}
    \caption{MSE $\delta_{\mathcal{M}_{p,n}}^2(\theta,\hat{\theta})$ as a function of $T$ with $p=10$, $n=20$ (left) and $n=50$~(right).}
    \label{fig:MSE}
\end{figure*}
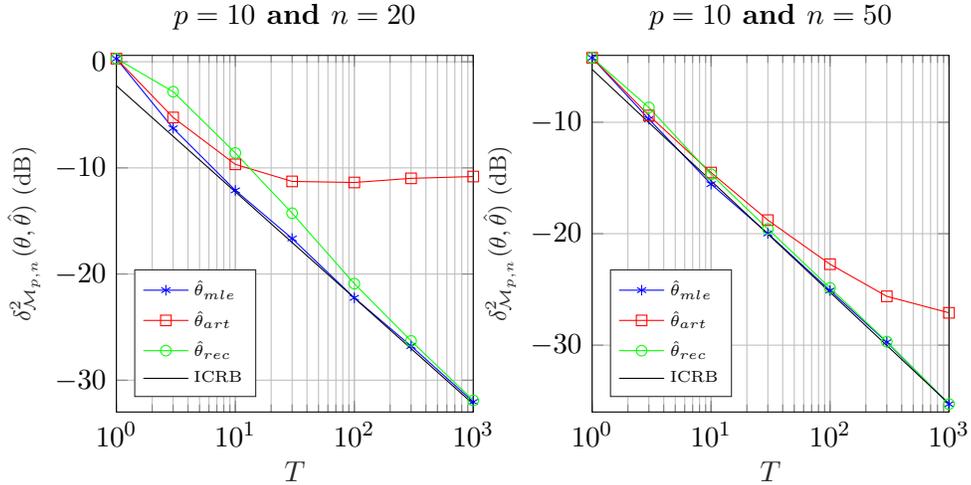

\section{Conclusion}
\label{sec:conclude}

We have adapted a change detector derived for CG data in order to execute it recursively and greatly reduce the complexity of the calculation. This approach is based on Riemannian optimization which required the construction of geometry for CG distribution. Simulations have shown the interest of this new algorithm to reduce the complexity while maintaining good performance. 

\section*{Acknowledgment}
This work was supported by ANR PHOENIX (ANR-15-CE23-0012) and ANR-ASTRID MARGARITA (ANR-17-ASTR-0015).

\bibliographystyle{elsarticle-num}
\bibliography{BiblioRecCD}

\end{document}